\documentclass{article} 
\usepackage{iclr2026_conference,times}

\usepackage{amsmath,amsfonts,bm}

\def\eqref#1{equation~\ref{#1}}

\def\1{\bm{1}}

\DeclareMathAlphabet{\mathsfit}{\encodingdefault}{\sfdefault}{m}{sl}
\SetMathAlphabet{\mathsfit}{bold}{\encodingdefault}{\sfdefault}{bx}{n}

\usepackage{hyperref}
\usepackage{url}
\usepackage{graphicx}
\usepackage{wrapfig}
\usepackage{booktabs} 
\usepackage{amssymb}
\usepackage{amsthm}
\newtheorem{proposition}{Proposition}
\newtheorem{theorem}{Theorem}[section]

\newtheorem{corollary}[theorem]{Corollary}
\usepackage{tcolorbox}
\tcbuselibrary{breakable} 
\usepackage{enumitem}
\usepackage{subcaption}
\usepackage{marvosym}

\title{From Broad Exploration to Stable Synthesis: Entropy-Guided Optimization for Autoregressive Image Generation}

\author{
Han Song$^1$\thanks{Equal contribution.}~~, ~Yucheng Zhou$^2$\footnotemark[1]~~, ~Jianbing Shen$^2$, ~Yu Cheng$^1$$^{\text{\Letter}}$ \\
$^1$The Chinese University of Hong Kong, $^2$University of Macau \\
\texttt{hansong@link.cuhk.edu.hk, yucheng.zhou@connect.um.edu.mo}\\
\texttt{chengyu@cse.cuhk.edu.hk}
}

\iclrfinalcopy 
\begin{document}
\maketitle
\renewcommand{\thefootnote}{\Letter} 
\footnotetext{Corresponding Author.}
\renewcommand{\thefootnote}{\arabic{footnote}} 
\begin{abstract}
Combining Chain-of-Thought (CoT) with Reinforcement Learning (RL) improves text-to-image (T2I) generation, yet the underlying interaction between CoT's exploration and RL's optimization remains unclear. We present a systematic entropy-based analysis that yields three key insights: (1) CoT expands the generative exploration space, while RL contracts it toward high-reward regions; (2) final reward is strongly negatively correlated with both the mean and variance of image-token entropy, highlighting the need to reduce uncertainty and instability; and (3) the entropy of the textual CoT directly governs downstream image quality, with lower-entropy CoTs leading to better generations. Motivated by these findings, we propose \emph{Entropy-Guided Group Relative Policy Optimization} (EG-GRPO), a fine-tuning strategy that reallocates optimization budget by uncertainty: low-entropy tokens are excluded from reward-driven updates to preserve stability, while high-entropy tokens receive an entropy bonus that encourages structured exploration without collapse. Experiments on standard T2I benchmarks demonstrate that EG-GRPO achieves state-of-the-art performance. Our code is available at \url{https://github.com/minebetter/EG-GRPO}.
\end{abstract}

\section{Introduction}

Text-to-image (T2I) generation has progressed rapidly with large-scale pretraining and strong autoregressive and diffusion architectures \citep{chen2023pixart,flux2024,wang2024emu3,zhou2024less}, yet two core challenges remain: (i) balancing \emph{exploration} for diversity against \emph{exploitation} for reward-aligned fidelity, and (ii) ensuring \emph{generation stability} under repeated sampling. Chain-of-Thought (CoT) prompting promises richer semantic planning \citep{wei2022chain}, while reinforcement learning (RL) directly optimizes task or preference rewards \citep{jiang2025t2i}. However, how CoT's exploratory behavior interacts with RL's optimization in T2I, and how this interaction governs uncertainty and stability, has not been systematically understood.

We analyze \emph{entropy dynamics} in autoregressive T2I models that combine CoT with RL (via Group Relative Policy Optimization, GRPO \citep{jiang2025t2i}) and use Shannon entropy to quantify token-level uncertainty in both modalities: textual CoT tokens and image tokens. Three empirical findings emerge. First, CoT \emph{expands} the exploration space, broadening the entropy distribution of generated outputs, whereas RL \emph{contracts} this space toward higher-reward regions. Second, final reward exhibits a strong negative correlation with both the \emph{mean} and the \emph{standard deviation} of image-token entropy, indicating that reducing uncertainty and instability is central to quality. Third, the entropy of the textual CoT \emph{directly} influences downstream image quality: lower-entropy CoTs yield tighter, higher-reward clusters under stable sampling.

Guided by these findings, we propose \emph{Entropy-Guided Group Relative Policy Optimization (EG-GRPO)}, a token-level modification of GRPO that reallocates gradient budget by uncertainty. Low-entropy (high-confidence) tokens are excluded from reward-driven updates, retaining only the KL-to-reference term to preserve stability and previously acquired knowledge. High-entropy tokens receive an \emph{entropy bonus} added to the advantage, encouraging structured exploration and accelerating uncertainty reduction where it matters. A batch-level calibration ties the bonus magnitude to the mass saved on low-entropy tokens, keeping update scale close to GRPO, and the bonus vanishes at GRPO equilibrium, preserving the stationary points of the base objective \citep{wang2025beyond}.

We evaluate EG-GRPO on T2I-CompBench \citep{huang2023t2i} and WISE \citep{niu2025wise} using a Janus-Pro autoregressive backbone in a discrete latent space \citep{chen2025janus} and a standard reward pipeline combining human-preference scoring, object grounding, and VQA signals \citep{wu2023human,liu2024grounding,wang2022git}. EG-GRPO attains state-of-the-art results, with pronounced gains in compositional generalization (e.g., attribute binding and object relations). Ablations that apply entropy guidance to only CoT tokens or only image tokens underperform the full model, confirming the need to control uncertainty in both semantic planning and visual decoding.

The main contributions of our paper can be summarized as follows:
\begin{itemize}[leftmargin=*]
\item We provide a quantitative account of the CoT--RL interaction through entropy dynamics: CoT expands exploration, RL contracts toward high-reward regions; reward is strongly negatively correlated with entropy mean and std of ima; and textual CoT entropy governs downstream image quality.
\item We introduce an entropy-guided, token-level refinement of GRPO that protects confident tokens via KL-only updates and focuses optimization on uncertain tokens via an entropy bonus, with calibrated budget and equilibrium-vanishing properties.
\item On T2I-CompBench and WISE, EG-GRPO achieves state-of-the-art performance and reduces uncertainty and instability in line with the analysis.
\end{itemize}

\section{Related Work}
\subsection{Text-to-Image generation Model}

Text-to-image generation has been advanced along two major paradigms: autoregressive modeling and diffusion-based approaches \citep{team2025nextstep,zhou2025draw}. On the autoregressive side, Parti~\citep{yu2022scaling} demonstrates that large-scale transformer models can achieve impressive compositional generation by predicting image tokens sequentially. Fluid~\citep{fan2024fluid} further explores continuous tokens and generation order, showing improved efficiency and quality. STAR~\citep{ma2024star} introduces a scale-wise autoregressive framework that progressively generates images from coarse to fine scales, while JetFormer~\citep{tschannen2024jetformer} directly models raw images and text in a unified autoregressive manner without discrete tokenization. More recently, NextStep-1~\citep{team2025nextstep} scales continuous-token autoregressive generation to 14B parameters, achieving strong performance in both image synthesis and editing. In parallel, diffusion models have become dominant for high-quality synthesis. VQ-Diffusion~\citep{gu2022vector} combines vector-quantized representations with diffusion for discrete latent modeling, and ERNIE-ViLG 2.0~\citep{feng2023ernie} extends this to large-scale multilingual text-to-image generation. UPainting~\citep{li2022upainting} introduces cross-modal guidance to unify simple and complex scenarios, while RPG~\citep{yang2024mastering} enhances controllability via recaptioning, planning, and region-based diffusion. These works illustrate the complementary strengths of autoregressive and diffusion frameworks in advancing the controllability, fidelity, and scalability of text-to-image generation.

\subsection{Chain of Thinking and Reinforcement Learning}

Recent works have explored integrating Chain-of-Thought (CoT) reasoning with reinforcement learning (RL) to improve text-to-image generation. Visual-CoG~\citep{li2025visual} introduces stage-aware RL with intermediate rewards across semantic planning, refinement, and evaluation, while ReasonGen-R1~\citep{zhang2025reasongen} and T2I-R1~\citep{jiang2025t2i} incorporate rationale-augmented data and bi-level reasoning chains optimized via GRPO. PromptEnhancer~\citep{wang2025promptenhancer} further demonstrates that CoT-based prompt rewriting with RL can enhance image quality without modifying the generator, and verification-based methods also inject preference alignment during synthesis~\citep{zhang2025let}. Beyond explicit CoT, RL has been applied for alignment in diffusion models, such as comparing DPO and GRPO~\citep{tong2025delving}, subject-driven preference optimization~\citep{miao2024subject}, and DPOK fine-tuning with KL regularization~\citep{fan2023dpok}. Related multimodal reasoning approaches, including ImageGen-CoT~\citep{liao2025imagegen} and reflective CoT for retrieval~\citep{wu2024training}, highlight the broader potential of structured reasoning. Combining CoT and RL provides a promising avenue for enhancing controllability, interpretability, and human alignment in text-to-image generation.
While SimpleAR \citep{wang2025simplear} and \citet{gallici2025fine} demonstrate the efficacy of standard GRPO for high-fidelity and style-aligned autoregressive generation, our approach introduces an entropy-guided mechanism to reallocate the optimization budget at the token level dynamically.

\section{Preliminaries}
\label{sec:preliminaries}

We begin by formalizing autoregressive text-to-image generation, introducing Shannon entropy as a measure of generative uncertainty, and reviewing Group Relative Policy Optimization (GRPO) for model refinement.

\subsection{Autoregressive Generation in Discrete Latent Space}
\label{ssec:autoregressive_model}

Autoregressive text-to-image models operate in a discrete latent space. An image $I$ is first encoded into tokens $z = (z_1, z_2, \dots, z_L)$ via a pre-trained tokenizer such as VQ-VAE. The conditional likelihood given a text prompt $c$ factorizes autoregressively:
\begin{align}
    p(z \mid c) = \prod_{i=1}^{L} p(z_i \mid z_{<i}, c).
\end{align}
A policy $\pi_\theta$ parameterized by $\theta$ models these conditionals. At step $i$, the policy outputs a categorical distribution $\pi_\theta(\cdot \mid z_{<i}, c)$ over the vocabulary $\mathcal{V}$, from which $z_i$ is sampled. The final image is reconstructed by decoding the full sequence $z$.

\subsection{Entropy as Predictive Uncertainty}
\label{ssec:entropy_definition}

For each step $i$, the policy distribution admits a Shannon entropy:
\begin{align}
    H(\pi_\theta(\cdot \mid z_{<i}, c)) = - \sum_{j \in \mathcal{V}} p_j \log p_j,
\end{align}
where $p_j$ denotes the probability of token $j$. Low entropy reflects confident, often high-fidelity predictions with reduced diversity, whereas high entropy reflects uncertainty, encouraging exploration and diverse generations at the cost of possible incoherence. 

\subsection{Group Relative Policy Optimization}
\label{ssec:grpo}

Group Relative Policy Optimization (GRPO) refines generative policies using relative rewards without requiring a value function. For a prompt $c$, the policy $\pi_\theta$ samples $G$ candidate sequences
\begin{align}
    \{o^{(i)}\}_{i=1}^G \sim \pi_\theta(\cdot \mid c), \quad
    o^{(i)} = (o^{(i)}_{1}, \dots, o^{(i)}_{T^{(i)}}).
\end{align}
Each sequence receives a reward $r^{(i)}$, which is normalized within the group:
\begin{align}
    \mu &= \tfrac{1}{G}\sum_{i=1}^G r^{(i)}, \quad 
    \sigma = \sqrt{\tfrac{1}{G}\sum_{i=1}^G (r^{(i)} - \mu)^2}, \quad
    A^{(i)} = \tfrac{r^{(i)} - \mu}{\max\{\sigma,\varepsilon\}}.
\end{align}
Broadcasting $A^{(i)}$ to all tokens yields the training objective
\begin{align}
    \mathcal{L}_{\mathrm{GRPO}}(\theta) 
    = - \tfrac{1}{G} \sum_{i=1}^G \tfrac{1}{T^{(i)}} \sum_{t=1}^{T^{(i)}}
    A^{(i)} \, \log \pi_\theta(o^{(i)}_{t} \mid c, o^{(i)}_{<t})
    + \beta \, D_{\mathrm{KL}}(\pi_\theta \,\|\, \pi_{\mathrm{ref}}),
\end{align}
where $\beta \ge 0$ controls a KL regularizer toward a reference policy. This group-based normalization yields scale-invariant advantages and stable updates, making GRPO well-suited for aligning autoregressive generators with task-specific rewards.

\section{Analysis of Entropy Dynamics in Text-to-Image Generation}
\label{sec:analysis}

To investigate the role of uncertainty in text-to-image generation, we analyze \textit{generative entropy} as a quantitative indicator of model behavior. Our study focuses on disentangling how CoT and RL fine-tuning affect entropy within the pipeline. By examining their individual and combined effects, we clarify how these techniques balance exploration and exploitation, and how the resulting entropy dynamics define the optimization objective for high-quality visual synthesis.

\begin{wrapfigure}{r}{0.59\textwidth}
    \vspace{-4mm}
    \centering
    \includegraphics[width=\linewidth]{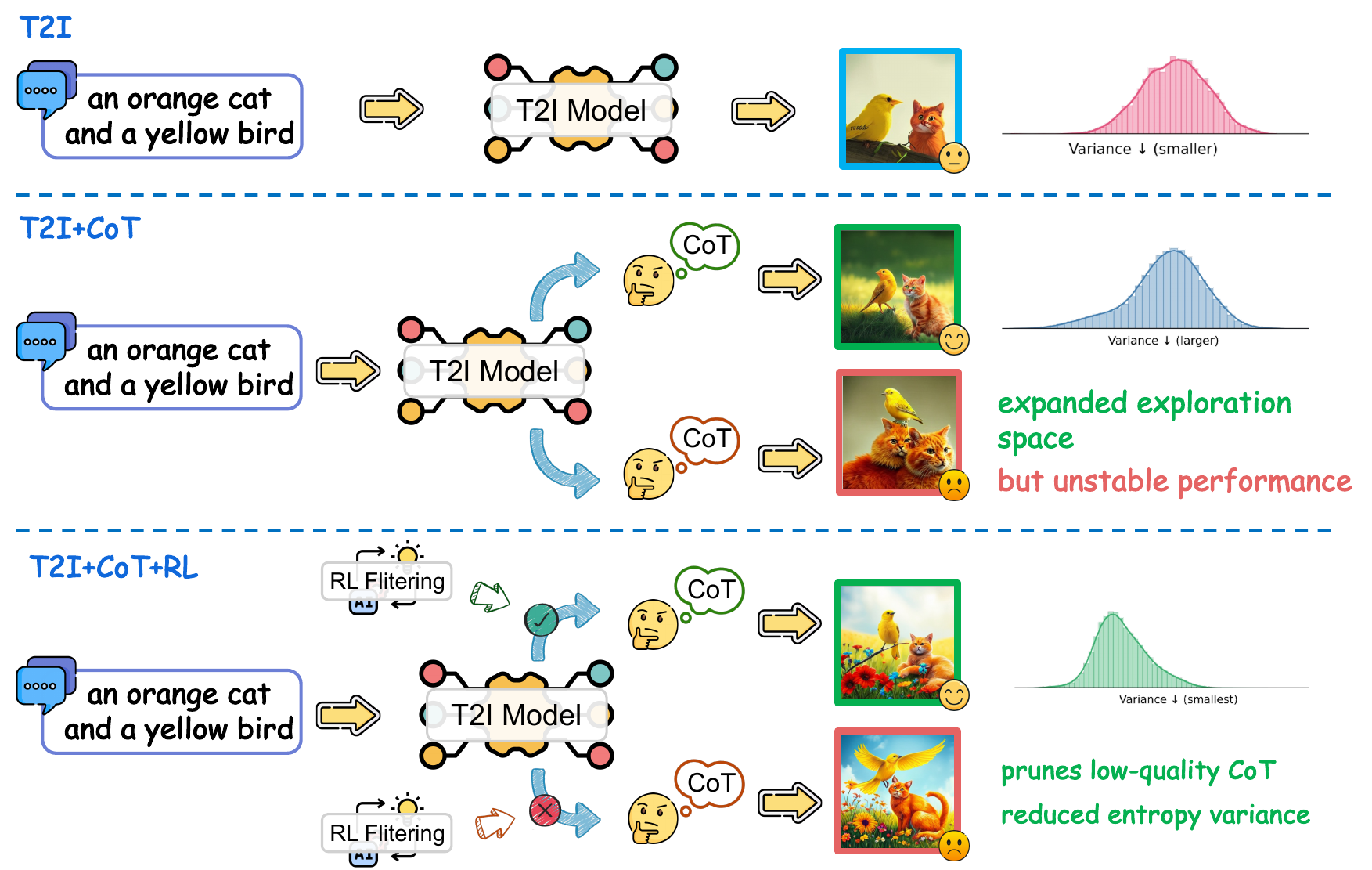}
    \vspace{-6mm}
    \caption{\small Comparison of different text-to-image generation methods: (a) autoregressive text-to-image generation, (b) CoT, and (c) with CoT and GRPO optimization.}
    \label{fig:method_comp}
    \vspace{-6mm}
\end{wrapfigure}

\subsection{The Dichotomy of Exploration and Exploitation: CoT vs. RL}
\label{ssec:dichotomy}

We begin our analysis by examining the distinct yet complementary roles of Chain-of-Thought (CoT) prompting and reinforcement learning (RL) fine-tuning. For each textual prompt, we generate multiple image candidates under three settings: the baseline model (\textit{Janus-Pro}), the baseline augmented with CoT reasoning (\textit{Janus-Pro+CoT}), and a GRPO-finetuned variant built upon \textit{Janus-Pro+CoT} (\textit{T2I-R1}). To investigate their differences, we assess each generated image by jointly measuring its output mean entropy and reward score, and visualize the resulting distributions in a two-dimensional space.

\begin{wrapfigure}{r}{0.52 \linewidth}
    \centering
    \vspace{-4mm}
    \includegraphics[width=\linewidth]{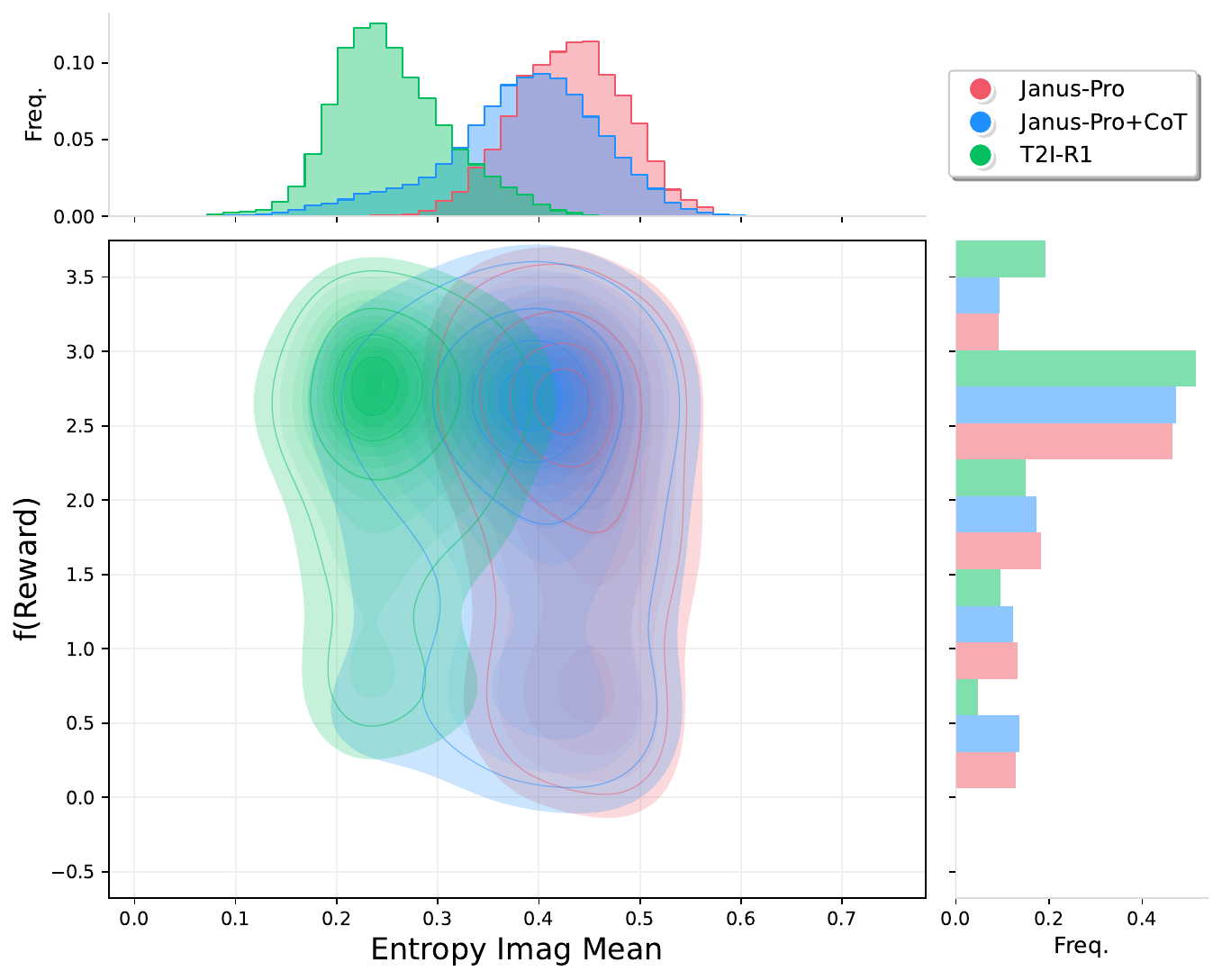}
    \caption{\small Entropy–reward distributions of different methods. CoT (\textit{Janus-Pro+CoT}) expands the exploratory space with more diverse outputs, while GRPO fine-tuning (\textit{T2I-R1}) contracts it toward higher-reward regions, yielding more stabilized, high-quality generations.}
    \label{fig:var_comp}
    \vspace{-3mm}
\end{wrapfigure}

As illustrated in Figure~\ref{fig:var_comp}, the introduction of CoT significantly broadens the entropy distribution of the generated images. It shows that CoT expands the model's exploratory space, enabling it to generate a more diverse set of outputs. This expanded space contains both low- and high-reward samples, indicating that CoT itself does not guarantee quality but rather increases the range of generative possibilities. Conversely, the application of Group Relative Policy Optimization (GRPO), which results in the T2I-R1, leads to a notable contraction and leftward shift of the entropy distribution. This demonstrates that the model learns to exploit the vast space unlocked by CoT, converging towards a much narrower, lower-entropy region that consistently yields higher rewards. This reveals a complementary relationship: CoT serves to \textbf{expand the exploratory landscape}, while RL acts as a refinement mechanism to \textbf{exploit this landscape and guide the model towards stable, high-quality regions}.

\subsection{Upstream Influence: How CoT's Textual Entropy Governs Image Quality}
\label{ssec:upstream}

\begin{figure}[ht]
    \centering
    \includegraphics[width=0.34\linewidth]{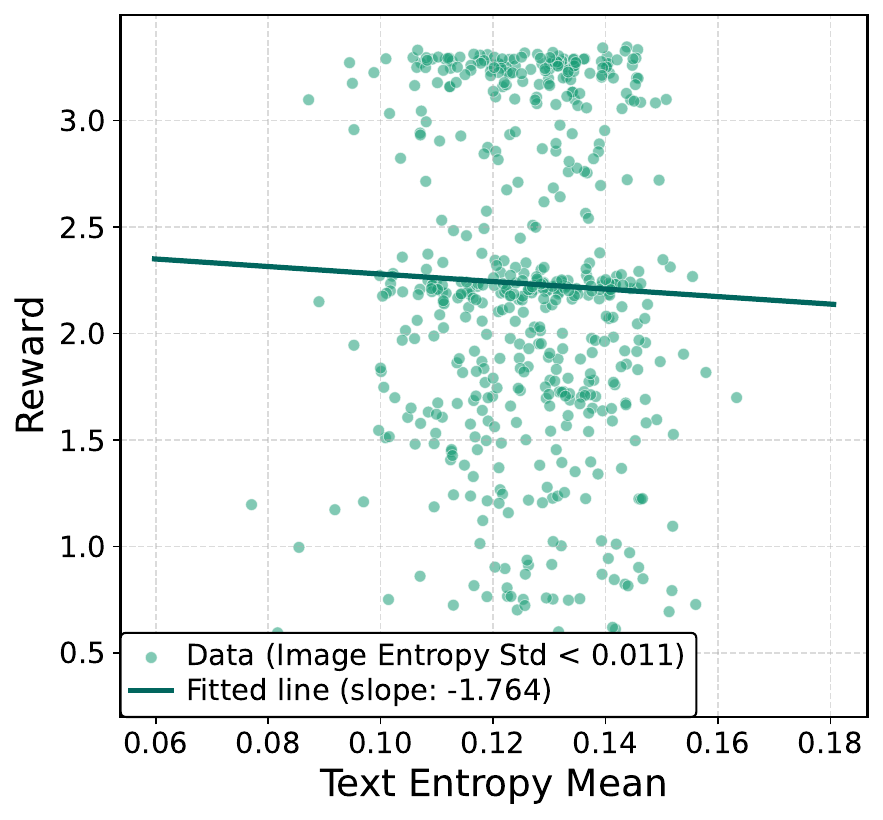}
    \includegraphics[width=0.60\linewidth]{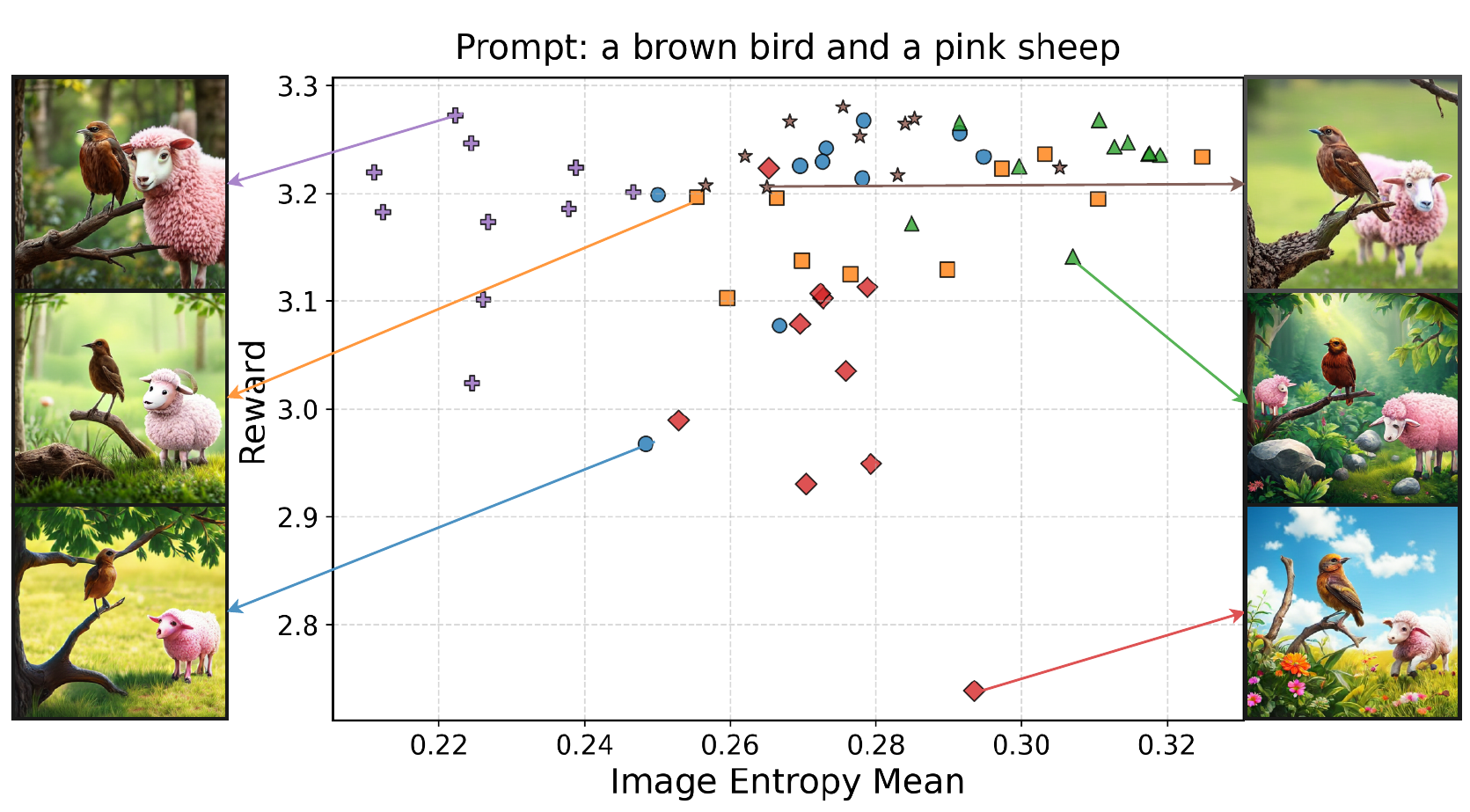}
    \caption{\small
        \textbf{Left:} Reward vs. CoT entropy (stable cases, Image Entropy Std $<$ 0.011). Higher CoT entropy correlates with lower image reward. 
        \textbf{Right:} Reward distributions across different CoTs for the same prompt. Images from the same CoT cluster together, with certain CoTs consistently yielding lower rewards.
    }
    \label{fig:reward_cot}
\end{figure}

Since Chain-of-Thought serves as the entry point of exploration, we further examine how the intrinsic uncertainty of the textual CoT affects downstream image generation. To isolate this effect, we focus on a subset of samples where image generation remains stable, defined as those with entropy variance below a small threshold of 0.011 across multiple runs. This filtering minimizes confounding factors due to unstable sampling, allowing us to probe the impact of CoT entropy directly.

As shown in Figure~\ref{fig:reward_cot} (Left), we observe a clear negative correlation between the mean entropy of the textual CoT and the average reward of the resulting images. Intuitively, high-entropy CoTs correspond to less coherent or more uncertain reasoning traces, which tend to degrade visual quality. In contrast, low-entropy CoTs provide more confident and consistent reasoning, ultimately leading to higher-reward generations. This analysis is grounded on the stable subset, ensuring that the observed trend is not an artifact of sampling instability.

To further dissect this phenomenon, we visualize the reward distributions of images conditioned on different CoTs for the same prompt (Figure~\ref{fig:reward_cot}, Right). Each distinct CoT forms a compact cluster in the reward space, highlighting its consistent influence on generation outcomes. Notably, certain CoTs repeatedly produce clusters with lower average rewards, suggesting that the quality bottleneck is determined upstream, at the textual reasoning stage. These findings demonstrate a direct transmission of uncertainty from text to image: the entropy of CoT reasoning acts as a critical upstream factor that governs the attainable quality of visual outputs.

\subsection{The Learned Objective: Minimizing Uncertainty and Instability for Higher Rewards}
\label{ssec:learned_objective}

\begin{figure}[ht]
    \centering
    \includegraphics[width=0.3\linewidth]{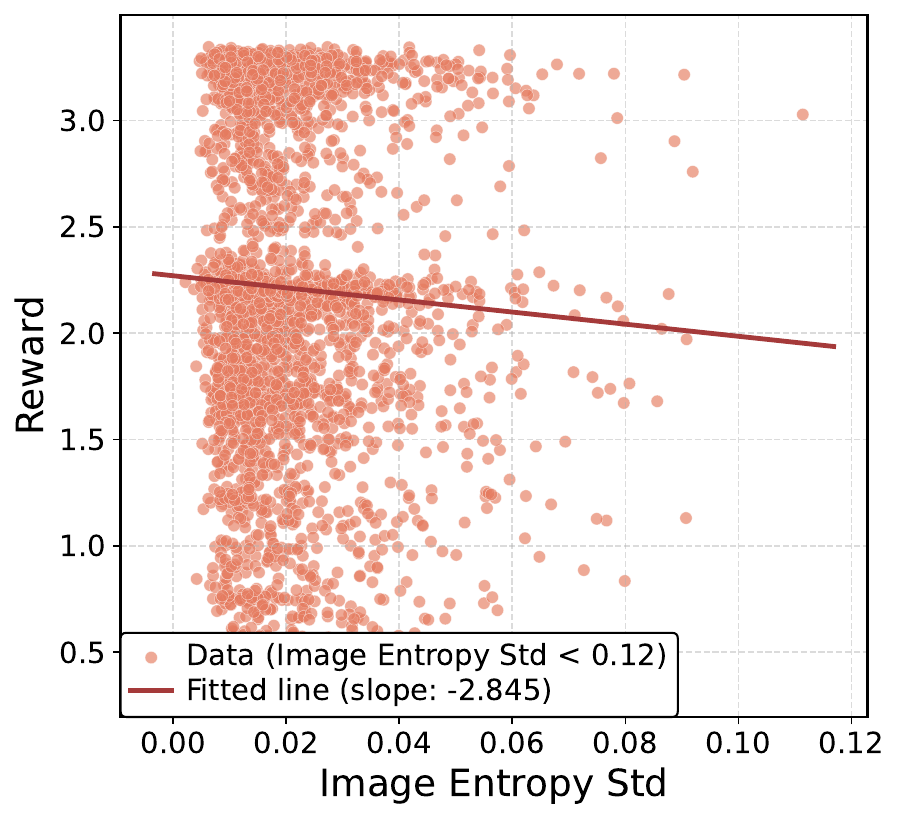}
    \includegraphics[width=0.3\linewidth]{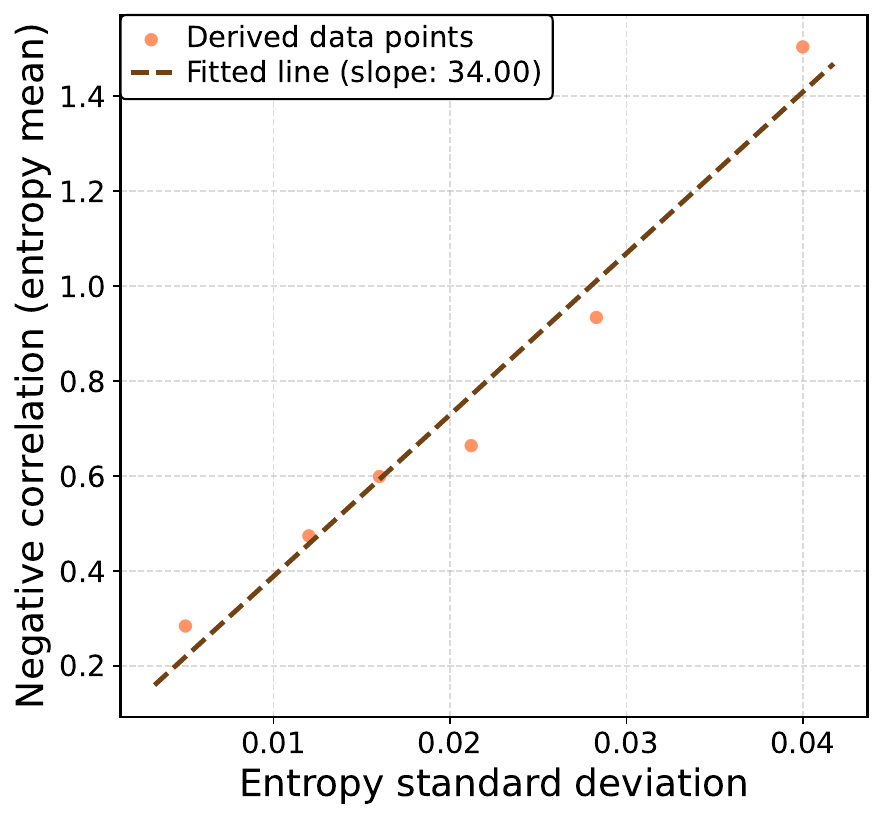}
    \includegraphics[width=0.3\linewidth]{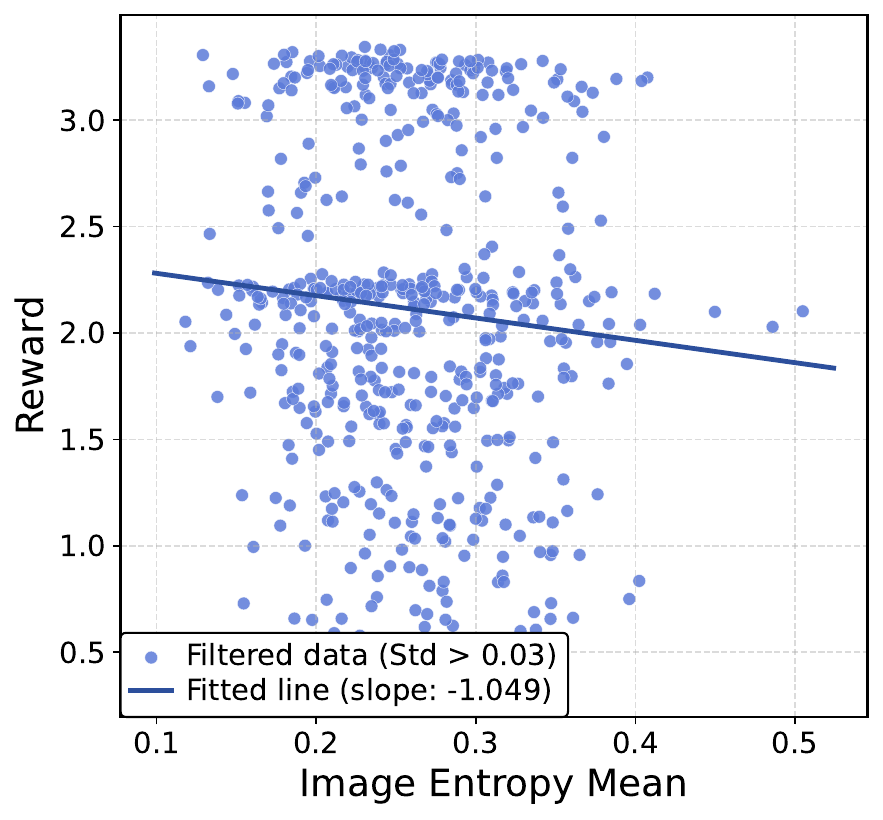}
    \caption{\small
    \textbf{Left:} Reward vs. entropy std. Higher instability (larger std) consistently lowers reward.  
    \textbf{Middle:} Relation between entropy std (x-axis) and the negative correlation of reward–entropy mean (y-axis). Greater instability strengthens the negative correlation.  
    \textbf{Right:} Reward vs. entropy mean under high-variance cases (std $>$ 0.03). Large std implies exploratory generation where RL has not converged; in this regime, reducing mean entropy is especially beneficial.  
    }
    \label{fig:reward_rel}
\end{figure}

We next investigate how the fully RL-trained model internalizes entropy control to optimize for reward. For each prompt–CoT pair, multiple images are generated, and their entropy statistics are aggregated. Specifically, the mean entropy characterizes the global level of uncertainty in the generative process, while the standard deviation (std) captures instability across different runs.

Empirical results reveal a consistent negative correlation between reward and standard deviation of entropy (Figure~\ref{fig:reward_rel}, Left). This indicates that generative instability is inherently detrimental: models that produce more consistent entropy trajectories across samples yield higher-quality outputs. Beyond this first-order observation, we further examine how instability modulates the role of uncertainty. Figure~\ref{fig:reward_rel} (Middle) demonstrates that the negative correlation between mean entropy and reward becomes progressively stronger as std increases. In other words, instability amplifies the adverse impact of uncertainty, rendering mean entropy a more decisive factor under high-variance conditions.

To isolate this effect, we analyze the high-variance regime (std $>$ 0.03), where the image token generation remains exploratory and the RL policy has not yet converged. As shown in Figure~\ref{fig:reward_rel} (Right), in such cases, lowering mean entropy proves particularly effective in improving reward. This finding suggests that, when the model operates in an unstable exploratory state, suppressing overall uncertainty is a critical pathway toward higher-quality generations.

These analyses indicate that the RL agent implicitly optimizes a compound objective: \textbf{simultaneously minimizing overall uncertainty (mean entropy) and reducing instability across runs (entropy std)}.

\section{Entropy-Guided Group Relative Policy Optimization}
\label{sec:eg-grpo}

Building on Section~\ref{sec:analysis}, we introduce a token-level optimization scheme that preserves GRPO's group-relative structure while reallocating updates toward uncertain parts of the generation process. \emph{Entropy-Guided GRPO (EG-GRPO)} applies to both textual CoT and image tokens and adds an entropy-based bonus only where uncertainty is high.

\subsection{Design Principles from Entropy Analysis}
\label{ssec:design_principles}

Section~\ref{sec:analysis} showed that: (i) CoT broadens exploration while RL contracts it toward high-reward regions; (ii) rewards are negatively correlated with both the mean and the variance of token entropies; and (iii) textual (CoT) entropy causally influences downstream image quality. We adopt three principles:
\begin{enumerate}[leftmargin=*]
\item \textbf{Focus on uncertainty.} Allocate more update mass to high-entropy tokens to reduce instability where it matters.
\item \textbf{Protect confidence.} On the lowest-entropy tokens, set the reward-driven advantage to zero so that only the KL-to-reference acts, preventing drift on confident regions.
\item \textbf{Stay reward-driven.} Retain the GRPO group-relative objective; entropy contributes an \emph{additive} per-token bonus at high entropy without replacing advantage.
\end{enumerate}

\subsection{Revisiting GRPO and Token Broadcast}
\label{ssec:revisit_grpo}

For prompt $c$, policy $\pi_\theta$ samples $G$ sequences $\{o^{(i)}\}_{i=1}^G$ with rewards $\{r^{(i)}\}$ normalized to group-relative advantages $A^{(i)}$. GRPO optimizes
\begin{align}
\mathcal{L}_{\mathrm{GRPO}}(\theta)
= - \frac{1}{G}\sum_{i=1}^G \frac{1}{T^{(i)}} \sum_{t=1}^{T^{(i)}}
A^{(i)} \, \log \pi_\theta\!\big(o^{(i)}_t \mid c, o^{(i)}_{<t}\big)
\;+\; \beta\, D_{\mathrm{KL}}\!\big(\pi_\theta \,\|\, \pi_{\mathrm{ref}}\big).
\label{eq:grpo}
\end{align}
This \emph{broadcasts} the same coefficient $A^{(i)}$ to all tokens of sequence $i$, ignoring per-token uncertainty and potentially wasting gradient budget on already-confident tokens.

\subsection{Entropy-Guided Token Selection}
\label{ssec:selection}

Let $H^{(i)}_t \triangleq H\!\big(\pi_\theta(\cdot \mid c, o^{(i)}_{<t})\big)$ be the Shannon entropy at token $t$ of sequence $i$, and define the normalized entropy $\bar{H}^{(i)}_t \triangleq H^{(i)}_t / \log|\mathcal{V}| \in [0,1]$.
For each sequence $i$ and each modality $m \in \{\text{text},\text{image}\}$ independently,\footnote{We rank and threshold entropies on textual CoT and image tokens separately, consistent with Section~\ref{sec:analysis}.} compute per-sequence percentiles and define
\[
\mathcal{S}^{(i,m)}_{\mathrm{hi}}=\text{top-}50\% \text{ by }\bar{H}^{(i)}_t,\quad
\mathcal{S}^{(i,m)}_{\mathrm{lo}}=\text{bottom-}20\%,\quad
\mathcal{S}^{(i,m)}_{\mathrm{mid}}=\text{remaining}.
\]
Introduce masks $M^{(i)}_t,U^{(i)}_t\in\{0,1\}$:
\[
M^{(i)}_t=\mathbb{I}[t \notin \mathcal{S}^{(i,m)}_{\mathrm{lo}}],\qquad
U^{(i)}_t=\mathbb{I}[t \in \mathcal{S}^{(i,m)}_{\mathrm{hi}}],
\]
so $M^{(i)}_t=0$ removes reward-driven updates on low-entropy tokens, and $U^{(i)}_t=1$ marks high-entropy tokens to receive a bonus.

\paragraph{Budget view.}
Let $p_{\mathrm{lo}},p_{\mathrm{mid}},p_{\mathrm{hi}}$ be the fractions of tokens in $\mathcal{S}^{(i,m)}_{\mathrm{lo}},\mathcal{S}^{(i,m)}_{\mathrm{mid}},\mathcal{S}^{(i,m)}_{\mathrm{hi}}$ ($p_{\mathrm{hi}}=0.5$, $p_{\mathrm{lo}}=0.2$, $p_{\mathrm{mid}}=0.3$). GRPO's per-sequence coefficient budget is
$
B^{(i)}_{\mathrm{GRPO}} \triangleq \tfrac{1}{T^{(i)}}\!\sum_{t=1}^{T^{(i)}} |A^{(i)}|=|A^{(i)}|.
$
Under EG-GRPO,
\begin{align}
B^{(i)}_{\mathrm{EG}} \triangleq \frac{1}{T^{(i)}} \sum_{t=1}^{T^{(i)}}
\big|\, M^{(i)}_t A^{(i)} + U^{(i)}_t \lambda\, \mathrm{sg}[\bar{H}^{(i)}_t] \,\big|,
\label{eq:budget}
\end{align}
with $\lambda \ge 0$ and stop-gradient $\mathrm{sg}[\cdot]$. Zeroing low-entropy updates \emph{saves} roughly $p_{\mathrm{lo}}|A^{(i)}|$ of mass and \emph{reinvests} it on high-entropy tokens through the additive bonus $\lambda\,\mathrm{sg}[\bar{H}^{(i)}_t]$.

\begin{proposition}[Per-batch budget balance]
\label{prop:balance}
For a batch $\mathcal{B}$, choose
\begin{align}
\lambda^\star
\;\triangleq\;
\kappa \cdot
\frac{\sum_{i \in \mathcal{B}} \big|A^{(i)}\big| \cdot \frac{1}{T^{(i)}}\!\sum_{t \in \mathcal{S}^{(i,m)}_{\mathrm{lo}}} 1}
{\sum_{i \in \mathcal{B}} \frac{1}{T^{(i)}}\!\sum_{t \in \mathcal{S}^{(i,m)}_{\mathrm{hi}}} \mathrm{sg}[\bar{H}^{(i)}_t]}
\quad \text{with }\kappa \in (0,1].
\label{eq:lambdastar}
\end{align}
Then
$
\mathbb{E}_{\mathcal{B}}[B^{(i)}_{\mathrm{EG}}]
\approx
\kappa \cdot \mathbb{E}_{\mathcal{B}}[B^{(i)}_{\mathrm{GRPO}}].
$
A detailed derivation and calibration discussion are deferred to Appendix~\ref{app:budget-proof}. Setting $\kappa=1$ yields batch-level budget neutrality in the calibrated upper-bound sense.
\end{proposition}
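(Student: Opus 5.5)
The plan is to decompose the per-sequence budget $B^{(i)}_{\mathrm{EG}}$ in \eqref{eq:budget} according to the three entropy strata. The bonus coefficient $\lambda$ should then appear linearly in a single aggregate quantity, so that the choice $\lambda^\star$ in \eqref{eq:lambdastar} is exactly what balances the mass removed from $\mathcal{S}^{(i,m)}_{\mathrm{lo}}$ against the mass added on $\mathcal{S}^{(i,m)}_{\mathrm{hi}}$.

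Concretely, since the strata partition the tokens of each sequence, I would write
\[
B^{(i)}_{\mathrm{EG}} = \frac{1}{T^{(i)}}\Big(\sum_{t\in\mathcal{S}_{\mathrm{lo}}} 0 \;+\; \sum_{t\in\mathcal{S}_{\mathrm{mid}}} |A^{(i)}| \;+\; \sum_{t\in\mathcal{S}_{\mathrm{hi}}} \big|A^{(i)}+\lambda\,\mathrm{sg}[\bar H^{(i)}_t]\big|\Big),
\]
using $M^{(i)}_t=0$ on the low stratum, $M^{(i)}_t=1,U^{(i)}_t=0$ on the middle stratum, and $M^{(i)}_t=U^{(i)}_t=1$ on the high stratum. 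The triangle inequality on the high stratum gives the calibrated upper bound
\[
B^{(i)}_{\mathrm{EG}} \le B^{(i)}_{\mathrm{GRPO}} - S^{(i)} + \lambda R^{(i)},
\]
where $S^{(i)} \triangleq |A^{(i)}|\frac{1}{T^{(i)}}\sum_{t\in\mathcal{S}_{\mathrm{lo}}}1$ is the saved mass and $R^{(i)} \triangleq \frac{1}{T^{(i)}}\sum_{t\in\mathcal{S}_{\mathrm{hi}}}\mathrm{sg}[\bar H^{(i)}_t]$ is the unit reinvestment. This bound is an equality whenever $A^{(i)}\ge 0$, because both summands are then nonnegative. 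Averaging over $\mathcal{B}$ and substituting $\lambda=\lambda^\star$, the ratio in \eqref{eq:lambdastar} is precisely $\kappa\sum_i S^{(i)}/\sum_i R^{(i)}$. The batch-level reinvested mass $\lambda^\star\sum_i R^{(i)}$ therefore equals exactly $\kappa\sum_i S^{(i)}$, and we obtain
\[
\mathbb{E}_{\mathcal{B}}[B^{(i)}_{\mathrm{EG}}] \le \mathbb{E}_{\mathcal{B}}[B^{(i)}_{\mathrm{GRPO}}] - (1-\kappa)\,\mathbb{E}_{\mathcal{B}}[S^{(i)}].
\]
At $\kappa=1$ this is exact budget neutrality in the upper-bound sense, matching the last sentence of the proposition.

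The main obstacle is turning the stated ``$\approx \kappa\cdot$'' into something precise. My derivation shows that $\kappa$ literally controls the ratio of reinvested to saved mass, not the ratio of total budgets. With $p_{\mathrm{lo}}=0.2$ fixed, the bound reads $(1-(1-\kappa)p_{\mathrm{lo}})\,\mathbb{E}[B_{\mathrm{GRPO}}]$. This coincides with $\kappa\,\mathbb{E}[B_{\mathrm{GRPO}}]$ only at $\kappa=1$, and it is a first-order approximation of it only for $\kappa$ near $1$. I would therefore state the result as: ``reinvested mass $=\kappa\times$ saved mass, hence budget $\le$ GRPO budget with equality at $\kappa=1$.'' I would present the approximate form as this reading.

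The secondary gap is the sign cancellation for $A^{(i)}<0$, where $|A^{(i)}+\lambda\bar H|<|A^{(i)}|+\lambda\bar H$. I would handle it in two ways. First, I would note that it only makes the inequality strict, so the result remains an upper bound. Second, I would bound the slack by $2\min(|A^{(i)}|,\lambda\bar H^{(i)}_t)$ per token. This shows the approximation is tight whenever the bonus $\lambda^\star\bar H$ is small relative to typical $|A^{(i)}|$, which is the regime assumed by calibration.
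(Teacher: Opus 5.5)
Your argument is correct. It establishes what the proposition can rigorously support, namely budget neutrality at $\kappa=1$ in the upper-bound sense, but it reaches the key bound by a different route than the paper.

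The paper writes $A^{(i)}=s^{(i)}a^{(i)}$ and assumes the sign $s^{(i)}$ is roughly symmetric under group normalization. It then uses the identity $\tfrac12(|a+b|+|{-a}+b|)=\max(a,b)$ to get an exact sign-averaged budget $(1-p_{\mathrm{lo}})a^{(i)}+\tfrac{1}{T^{(i)}}\sum_{t\in\mathcal{S}_{\mathrm{hi}}}(\lambda h^{(i)}_t-a^{(i)})_+$. Only afterwards does it relax with $(x-a)_+\le x$ to reach $(1-p_{\mathrm{lo}})a^{(i)}+\lambda R^{(i)}$, the same bound you obtain.

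Your token-wise triangle inequality gives that bound pathwise, for every sequence and every sign pattern, with no symmetry assumption. It also identifies exactly when equality holds ($A^{(i)}\ge 0$). The paper's detour buys an exact expression, which its root-finding ``exact $\kappa$-scaling'' note relies on. Nothing is lost on your side, though: your per-token slack $2\min(|A^{(i)}|,\lambda\bar H^{(i)}_t)$ for $A^{(i)}<0$, summed over the high stratum and averaged over a symmetric sign, reproduces the paper's discrepancy $\varepsilon^{(i)}(\lambda)$.

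Your diagnosis of $\kappa$ is also right, and it applies to the paper's own proof. Setting its upper bound equal to $\kappa\,\mathbb{E}_{\mathcal{B}}[a^{(i)}]$ gives $\lambda=(\kappa-1+p_{\mathrm{lo}})\,\mathbb{E}_{\mathcal{B}}[a^{(i)}]/\mathbb{E}_{\mathcal{B}}[H^{(i)}_{\mathrm{hi}}]$, which coincides with the stated $\lambda^\star$ only at $\kappa=1$. Moreover, its target equation has no solution for $\kappa<1-p_{\mathrm{lo}}$, because its left side is nonnegative.

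Two of your side remarks should be corrected.

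First, $1-(1-\kappa)p_{\mathrm{lo}}$ and $\kappa$ have slopes $p_{\mathrm{lo}}$ and $1$ at $\kappa=1$, so they agree only at that point. The gap $(1-\kappa)(1-p_{\mathrm{lo}})$ is itself first order, so calling one a first-order approximation of the other overstates the match.

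Second, the calibration does not put you in the regime where the bonus is small compared with $|A^{(i)}|$. At $\kappa=1$, $\lambda^\star$ forces $\lambda^\star\sum_i R^{(i)}=p_{\mathrm{lo}}\sum_i|A^{(i)}|$. So the average bonus on a high-entropy token is $p_{\mathrm{lo}}/p_{\mathrm{hi}}\approx 0.4$ times the batch-average $|A^{(i)}|$.
\begin{itemize}
\item For any sequence with $\lambda^\star\bar H^{(i)}_t\le|A^{(i)}|$ throughout its high stratum, averaging over a symmetric sign gives exactly $(1-p_{\mathrm{lo}})|A^{(i)}|$.
\item In that case the reinvested mass cancels completely, so the slack equals the entire reinvestment you are trying to balance against the saved mass.
\end{itemize}
The inequality should therefore be presented as a genuine upper bound, not an approximately tight one. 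That is precisely why the proposition hedges with ``in the calibrated upper-bound sense.''
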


\paragraph{Fixed-point neutrality.}
Because $\lambda^\star$ scales with $\sum_i |A^{(i)}|$, the bonus vanishes at GRPO equilibrium where group-relative advantages cancel. See Appendix~\ref{app:fixed-point} for a formal proof.

\begin{corollary}[Preserving GRPO stationary points]
\label{cor:fixed-point}
If $A^{(i)}\equiv 0$ for all $i$ and $\lambda=\lambda^\star$ with $\kappa=1$, then $\lambda^\star=0$ and $\tilde{A}^{(i)}_t\equiv 0$ for all $t$; EG-GRPO reduces to the KL regularizer and preserves the stationary point.
\end{corollary}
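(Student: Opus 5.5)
The argument is a direct substitution into the definitions. We take $\tilde{A}^{(i)}_t \triangleq M^{(i)}_t A^{(i)} + U^{(i)}_t \lambda\,\mathrm{sg}[\bar{H}^{(i)}_t]$, the per-token coefficient implicit in the budget expression~\eqref{eq:budget}. Under this reading, EG-GRPO minimizes the objective of~\eqref{eq:grpo} with $A^{(i)}$ replaced by $\tilde{A}^{(i)}_t$ inside the token sum, and with the KL term left unchanged.

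\textbf{Step 1: the numerator of $\lambda^\star$ vanishes.} Under $A^{(i)}\equiv 0$, every term $|A^{(i)}|\cdot \frac{1}{T^{(i)}}|\mathcal{S}^{(i,m)}_{\mathrm{lo}}|$ in the numerator of~\eqref{eq:lambdastar} is zero. The numerator is therefore $0$, and with $\kappa=1$ we get $\lambda^\star = 0$.

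\textbf{Step 2: the denominator is nonzero.} This is the one point that needs care, because a $0/0$ would make $\lambda^\star$ undefined. The denominator is a sum of nonnegative normalized entropies over $\mathcal{S}_{\mathrm{hi}}$. It is strictly positive unless every high-entropy token is deterministic, in which case all $\bar H^{(i)}_t = 0$ and the bonus term $U^{(i)}_t\lambda\,\mathrm{sg}[\bar H^{(i)}_t]$ vanishes for any finite $\lambda$. We therefore adopt the convention $\lambda^\star := 0$ when the denominator vanishes. Either way the bonus is zero, so this degenerate case does not affect the conclusion.

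\textbf{Step 3: the per-token coefficient vanishes.} Substituting into $\tilde{A}^{(i)}_t$ gives
\[
\tilde{A}^{(i)}_t = M^{(i)}_t\cdot 0 + U^{(i)}_t\cdot 0\cdot \mathrm{sg}[\bar H^{(i)}_t] = 0
\]
for every $t$, regardless of the masks.

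\textbf{Step 4: reduction to the KL regularizer.} The policy-gradient part of the EG-GRPO loss is identically zero. Since $\lambda^\star$ and $\bar H$ enter only through stop-gradients, they contribute no hidden gradient terms, and the gradient of the EG-GRPO loss equals $\beta\nabla_\theta D_{\mathrm{KL}}(\pi_\theta\|\pi_{\mathrm{ref}})$. Under the same condition $A^{(i)}\equiv0$, the GRPO gradient from~\eqref{eq:grpo} is also exactly $\beta\nabla_\theta D_{\mathrm{KL}}$. The two gradient fields therefore coincide on this set, so any $\theta$ that is stationary for GRPO there is stationary for EG-GRPO. This gives the claimed preservation of the stationary point.
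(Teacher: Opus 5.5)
Your proof is correct and takes the same route as the paper. When $A^{(i)}\equiv 0$ the numerator of $\lambda^\star$ vanishes, so $\lambda^\star=0$, so $\tilde{A}^{(i)}_t\equiv 0$, and the loss reduces to the KL term, which is also what GRPO reduces to; your handling of the degenerate $0/0$ denominator and your remark that $\lambda^\star$ and $\bar{H}^{(i)}_t$ sit under stop-gradients are small rigor additions the paper omits, but they do not change the argument.
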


\subsection{Entropy-Biased Advantage}
\label{ssec:adv_mod}

We modify the broadcasted coefficient at token $t$ of sequence $i$ by
\begin{align}
\tilde{A}^{(i)}_t
\;=\;
M^{(i)}_t \, A^{(i)} \;+\; U^{(i)}_t \, \lambda \, \mathrm{sg}\!\big[\bar{H}^{(i)}_t\big],
\label{eq:tildeA}
\end{align}
where $M^{(i)}_t$ removes reward-driven updates on the lowest-entropy $20\%$ tokens and $U^{(i)}_t$ adds an entropy bonus on the highest-entropy $50\%$ tokens. The EG-GRPO loss is
\begin{align}
\mathcal{L}_{\mathrm{EG\text{-}GRPO}}(\theta)
= - \frac{1}{G}\sum_{i=1}^G \frac{1}{T^{(i)}} \sum_{t=1}^{T^{(i)}}
\tilde{A}^{(i)}_t \, \log \pi_\theta\!\big(o^{(i)}_t \mid c, o^{(i)}_{<t}\big)
\;+\; \beta\, D_{\mathrm{KL}}\!\big(\pi_\theta \,\|\, \pi_{\mathrm{ref}}\big).
\label{eq:eggrpo}
\end{align}
$\beta$ and the reference-policy KL are unchanged; low-entropy tokens are therefore governed solely by KL when $M^{(i)}_t=0$.

\paragraph{Reward-shaping view.}
Define a token-level pseudo-reward
$
\tilde{r}^{(i)}_t \triangleq r^{(i)}_{\text{grp}} \cdot M^{(i)}_t + \lambda \,\mathrm{sg}[\bar{H}^{(i)}_t] \cdot U^{(i)}_t,
$
where $r^{(i)}_{\text{grp}}$ induces $A^{(i)}$. Then
$
\mathbb{E}\big[\nabla_\theta \log \pi_\theta(o^{(i)}_t \mid \cdot)\,\tilde{r}^{(i)}_t\big]
$
is an unbiased policy-gradient estimator for an augmented objective whose baseline component is GRPO.

\subsection{Why EG-GRPO Reduces Uncertainty and Preserves Knowledge}
\label{ssec:theory}

\paragraph{(A) Targeted entropy reduction.}
For small steps, the update at token $t$ is proportional to $\tilde{A}^{(i)}_t \nabla_\theta \log \pi_\theta(o^{(i)}_t\!\mid\!\cdot)$. On high-entropy tokens, $\tilde{A}^{(i)}_t = A^{(i)} + \lambda\,\mathrm{sg}[\bar{H}^{(i)}_t]$ strengthens positive updates and attenuates negative ones, lowering entropy where it is largest under softmax parameterizations.

\paragraph{(B) Stability on confident tokens.}
When $M^{(i)}_t=0$, reward-driven gradients vanish and only KL-to-reference remains, protecting confident regions from drift and preserving learned knowledge.

\paragraph{(C) Proximity to GRPO equilibrium.}
With $\lambda=\lambda^\star$ and $\kappa=1$, the batch-wise coefficient budget matches GRPO (Appendix~\ref{app:budget-proof}), and the bonus disappears at equilibrium (Appendix~\ref{app:fixed-point}), reallocating update mass without altering stationary points.

\section{Experiments}

\subsection{Experimental Settings}

\paragraph{Training Details.}
Following \citep{jiang2025t2i}, we train our policy on the same 6,786 text prompts drawn from T2I-CompBench \citep{huang2023t2i}, which contain only textual descriptions without paired images. The prompts are accompanied by structured object–attribute annotations that were automatically extracted using GPT-4o mini in prior work \citep{jiang2025t2i}. The policy backbone is initialized from Janus-Pro-7B \citep{chen2025janus}. Optimization uses a learning rate of $1\times10^{-6}$ and a KL coefficient $\beta=0.01$. For the reward pipeline, we combine HPS \citep{wu2023human} as the human-preference estimator, GroundingDINO \citep{liu2024grounding} as the object detector, and GIT \citep{wang2022git} as the VQA model. The object–relation module (ORM) is implemented by finetuning LLaVA-OneVision-7B following the procedure of \citet{guo2025can}.

\paragraph{Benchmark.}
To assess the effectiveness of our approach, we rely on two established evaluation suites: T2I-CompBench \citep{huang2023t2i} and WISE \citep{niu2025wise}. T2I-CompBench provides 6,000 prompts designed to test compositional generalization. The benchmark covers three broad categories: attribute binding, object relations, and complex compositions, which are further split into six sub-categories, such as color/shape/texture bindings, spatial and non-spatial relations, and multi-object compositions. In contrast, WISE focuses on knowledge-intensive reasoning. It contains 1,000 prompts spanning cultural commonsense, spatial–temporal reasoning, and natural science, requiring the model to infer what specific entity or situation should appear in the image. For WISE, since the corresponding reasoning instructions from prior work \citep{jiang2025t2i} were not released, we reimplemented them ourselves and provide the exact templates in the Appendix~\ref{app:wise} for transparency. For both benchmarks, we otherwise follow the official evaluation protocols.

\subsection{Main Results}
\label{ssec:main_results}

\begin{table}[ht]\small
\centering
\begin{tabular}{lcccccc}
\toprule
\bf Model & \multicolumn{3}{c}{\bf T2I-CompBench} & \multicolumn{3}{c}{\bf WISE} \\
\cmidrule(lr){2-4} \cmidrule(lr){5-7}
      & \bf Color & \bf Shape & \bf Texture & \bf Culture & \bf Spatio-temporal & \bf Science \\
\midrule
PixArt-$\alpha$ \citep{chen2023pixart}   & 66.90 & 49.27 & 64.77 & 45.00 & 49.00 & \textbf{46.33} \\
SD-v1.5 \citep{rombach2022high}        & 37.58 & 37.13 & 41.86 & 34.00 & 33.50 & 26.00 \\
FLUX.1-dev \citep{flux2024}           & 74.07 & 57.18 & 69.22 & 48.00 & \textbf{60.00} & 42.67 \\
Emu3 \citep{wang2024emu3}              & 75.44 & 57.06 & 71.64 & 34.00 & 46.50 & 37.67 \\
Show-o \citep{xie2024show}           & 56.00 & 41.00 & 46.00 & 28.00 & 44.00 & 35.33 \\
Janus-Pro-7B \citep{chen2025janus}     & 63.59 & 35.28 & 49.36 & 30.00 & 43.00 & 34.67 \\
T2I-R1$^\ast$ \citep{jiang2025t2i}       & 82.58 & 58.67 & 76.94 & 48.00 & 55.50 & 45.00 \\
EG-GRPO (Ours)                       & \textbf{84.11} & \textbf{60.88} & \textbf{77.38} & \textbf{49.00} & 56.00 & \textbf{46.33} \\
\bottomrule
\end{tabular}
\vspace{-2mm}
\caption{Comparison of models across \textbf{T2I-CompBench} and \textbf{WISE}. 
Spatio-temporal is the average of Time and Space; Science is the average of Biology, Physics, and Chemistry. $^\ast$Results are obtained by evaluating the officially released model under the same experimental settings for fair comparison. }
\label{tab:model_comparison}
\vspace{-1mm}
\end{table}

As shown in Table~\ref{tab:model_comparison}, EG-GRPO achieves the strongest results on T2I-CompBench, surpassing all baselines in Color (84.11), Shape (60.88), and Texture (77.38), with particularly notable gains on Shape binding. On WISE, EG-GRPO improves over T2I-R1 in Culture (49.00 vs.\ 48.00) and Science (46.33 vs.\ 45.00) while maintaining comparable performance in Spatio-temporal (56.00 vs.\ 55.50). These consistent improvements demonstrate that entropy-guided updates effectively enhance compositional reasoning and robustness while preserving the stability of knowledge learned by the base model.

\subsection{Ablation Study}
\label{ssec:ablation}

\begin{table}[t]\small
\centering
\begin{tabular}{lcccccc}
\toprule
\bf Model & \multicolumn{3}{c}{\bf T2I-CompBench} & \multicolumn{3}{c}{\bf WISE} \\
\cmidrule(lr){2-4} \cmidrule(lr){5-7}
      & \bf Color & \bf Shape & \bf Texture & \bf Culture & \bf Spatio-temporal & \bf Science \\
\midrule
EG-GRPO       & 84.11 & 60.88 & 77.38 & 49.00 & 56.00 & 46.33 \\
\midrule
w/ only sem    & 81.29 & 55.68 & 74.10 & 47.00 & 56.00 & 43.67\\
w/ only tok   & 79.25 & 53.73 & 72.46 & 45.00 & 56.00 & 43.00\\
 w/o All      & 82.58 & 58.67 & 76.94 & 48.00 & 55.50 & 45.00 \\
\bottomrule
\end{tabular}
\vspace{-1mm}
\caption{\small
Ablation results of EG-GRPO. We compare the full method with variants that apply entropy guidance only to textual CoT tokens (w/
only sem), only to image tokens (w/ only tok), or
not at all (w/o All).
}
\label{tab:ablation_study}
\vspace{-3mm}
\end{table}

Table~\ref{tab:ablation_study} summarizes the effect of applying entropy guidance on different token types. The full EG-GRPO model, which adds entropy bonuses to both textual CoT tokens and image tokens, achieves the best overall performance. In contrast, applying entropy guidance to only textual CoT tokens (\emph{w/ only sem}) or only image tokens (\emph{w/ only tok}) leads to degraded performance. This suggests that introducing entropy regulation in a single modality can create an imbalance during optimization, which may be more harmful than applying no entropy guidance at all. The baseline without entropy guidance (\emph{w/o All}) performs worse than the full EG-GRPO model, confirming that the proposed entropy-aware updates are essential for improved compositional generalization.

\subsection{Analysis on Diversity}
\label{app:diversity_efficiency}
\begin{wraptable}{r}{0.3\linewidth}\small
    \centering
    \vspace{-6mm}
    \resizebox{\linewidth}{!}{
    \begin{tabular}{cc}
        \toprule
        \textbf{Step} & \textbf{Diversity (Vendi Score)} \\
        \midrule
        100 & 2.7305 \\
        200 & 2.7233 \\
        400 & 2.7212 \\
        600 & 2.7151 \\
        800 & 2.7159 \\
        \bottomrule
    \end{tabular}}
    \vspace{-2mm}
    \caption{\small Evolution of diversity during the GRPO training process. As the model converges towards high-reward regions, a slight reduction in diversity is observed, reflecting the exploration-exploitation trade-off.}
    \label{tab:vendi_dynamics}
    \vspace{-8mm}
\end{wraptable}
In this section, we address the trade-off between entropy reduction and generative diversity.
While entropy is a measure of uncertainty, reducing it raises the question of whether the model's creative expressivity and output diversity are negatively impacted. To investigate this, we employ the \textbf{Vendi Score}~\citep{FriedmanD23}, a reference-free metric designed to quantify diversity in machine learning models. Additional qualitative diversity comparisons are provided in Sec.~\ref{app:more_case}.

\textbf{Dynamics of Diversity during RL.}
The RL naturally contracts the exploration space to focus on high-reward regions. As shown in Table~\ref{tab:vendi_dynamics}, our analysis of the GRPO training dynamics confirms this expected behavior: as the model optimizes for reward-aligned fidelity from Step 100 to 800, the diversity naturally shows a slight decrease.

\textbf{Preserving Diversity at Similar Quality.}
Crucially, our method targets \textit{instability} rather than \textit{semantic diversity}. To verify this, we compared EG-GRPO against the baseline (T2I-R1) on a filtered subset of generated samples where both models achieved similar quality scores (defined as $|\Delta \text{Quality}| < 0.1$). The quality score is an aggregate of BLIP-2~\citep{0008LSH23}, LAION-Aesthetics~\citep{LAION2022aesthetic_predictor}, and PickScore~\citep{KirstainPSMPL23}.

\begin{wraptable}{r}{0.50\columnwidth}\small
    \centering
    \vspace{-4mm}
    \setlength{\tabcolsep}{8pt}
    \resizebox{\linewidth}{!}{
    \begin{tabular}{lcc}
        \toprule
        \textbf{Metric} & \textbf{EG-GRPO} & \textbf{T2I-R1} \\
        \midrule
        Quality Mean & 13.86 & 13.85 \\
        \textbf{\small Diversity (Vendi Score)} & \textbf{2.593} & 2.592 \\
        \bottomrule
    \end{tabular}}
    \vspace{-2mm}
    \caption{\small Comparison of Diversity (Vendi Score) between EG-GRPO (Ours) and T2I-R1 (Baseline) on a subset of samples with similar quality scores. Our method maintains diversity comparable to the baseline.}
    \label{tab:iso_quality_diversity}
    \vspace{-8mm}
\end{wraptable}

As shown in Table~\ref{tab:iso_quality_diversity}, EG-GRPO maintains a Vendi Score (2.593) virtually identical to the baseline (2.592) under the same quality constraints. This indicates that the Entropy Bonus (Section \ref{ssec:selection}) successfully preserves valid exploration pathways while suppressing ``bad'' uncertainty that leads to instability, rather than collapsing the model into a single mode.

\subsection{Analysis of Entropy}
\label{ssec:entropy_analysis}
Figure~\ref{fig:entropy_text_image} compares the entropy distributions of EG-GRPO and T2I-R1 for textual CoT tokens (left) and image tokens (right).
The entropy measures the \emph{conditional uncertainty} of token decoding under a given plan or partial image, reflecting generation stability rather than sample-level diversity.

\begin{wrapfigure}{r}{0.52\linewidth}
\centering
    \centering
    \includegraphics[width=\linewidth]{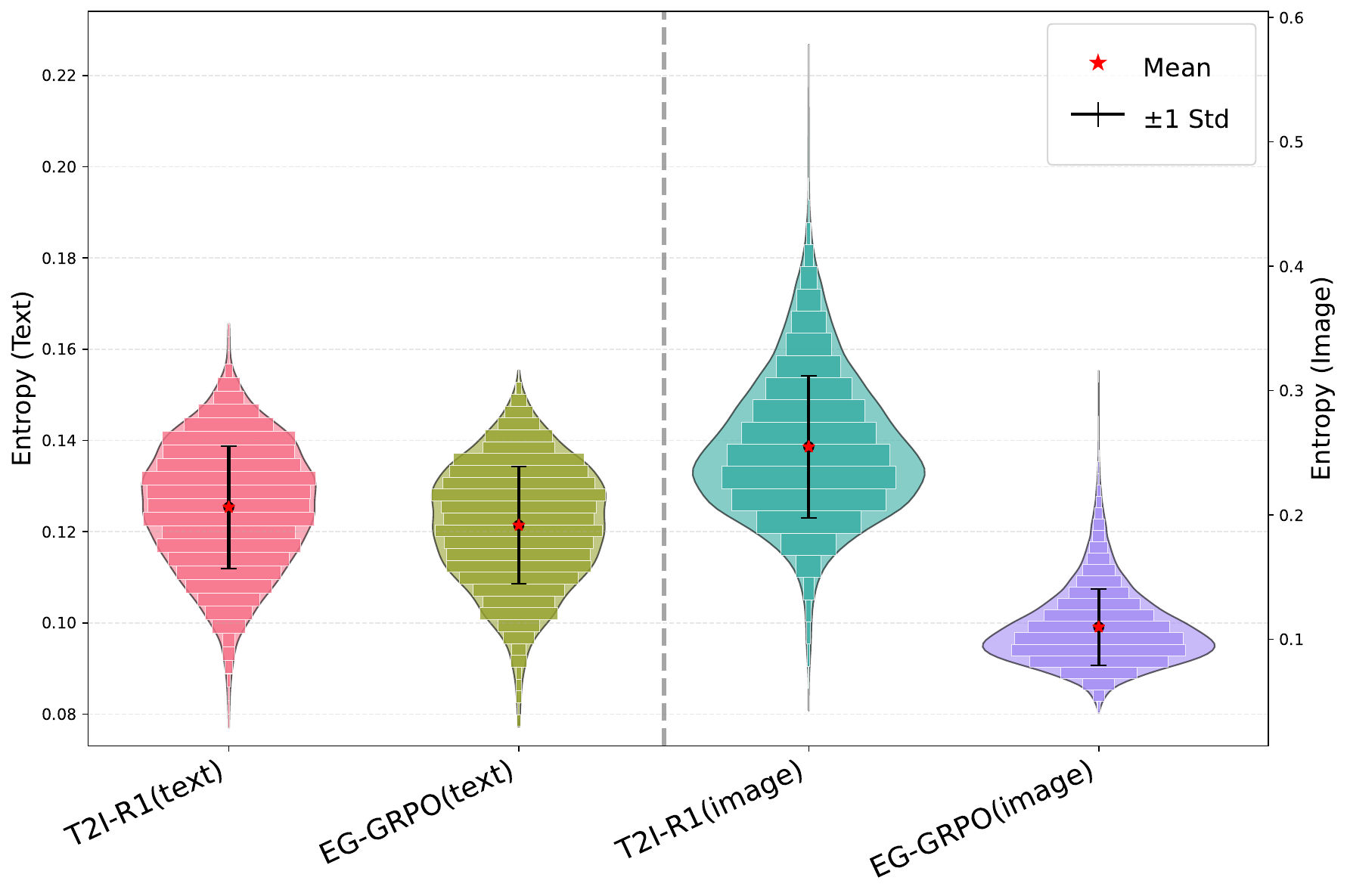}
    \caption{\small Entropy distributions of EG-GRPO vs. T2I-R1: left for textual CoT tokens, right for image tokens.}
    \label{fig:entropy_text_image}
\vspace{-3mm}
\end{wrapfigure}
In Figure~\ref{fig:entropy_text_image}, EG-GRPO reduces both the mean and variance of entropy, with a markedly stronger effect on image tokens.
This indicates more confident and stable visual token generation, where high entropy typically corresponds to decoding instability or local artifacts rather than meaningful variation.

Moreover, lower token-level entropy does not imply reduced diversity.
EG-GRPO is designed to suppress \emph{bad uncertainty} while preserving \emph{good diversity} across samples, as diversity primarily emerges at the sample level rather than the token level.
To avoid premature collapse, we introduce an Entropy Bonus (Eq.~\ref{eq:tildeA}) that explicitly encourages exploration in high-entropy regions.
Consistent with this design, the Vendi Score (as mentioned in Sec.~\ref{app:diversity_efficiency}) shows that sample diversity remains stable throughout training, with only a mild decrease as the model converges to high-reward regions, reflecting a normal exploration--exploitation trade-off rather than mode collapse.

\subsection{Case Study}

\begin{figure}[ht]
    \centering
    \includegraphics[width=\linewidth]{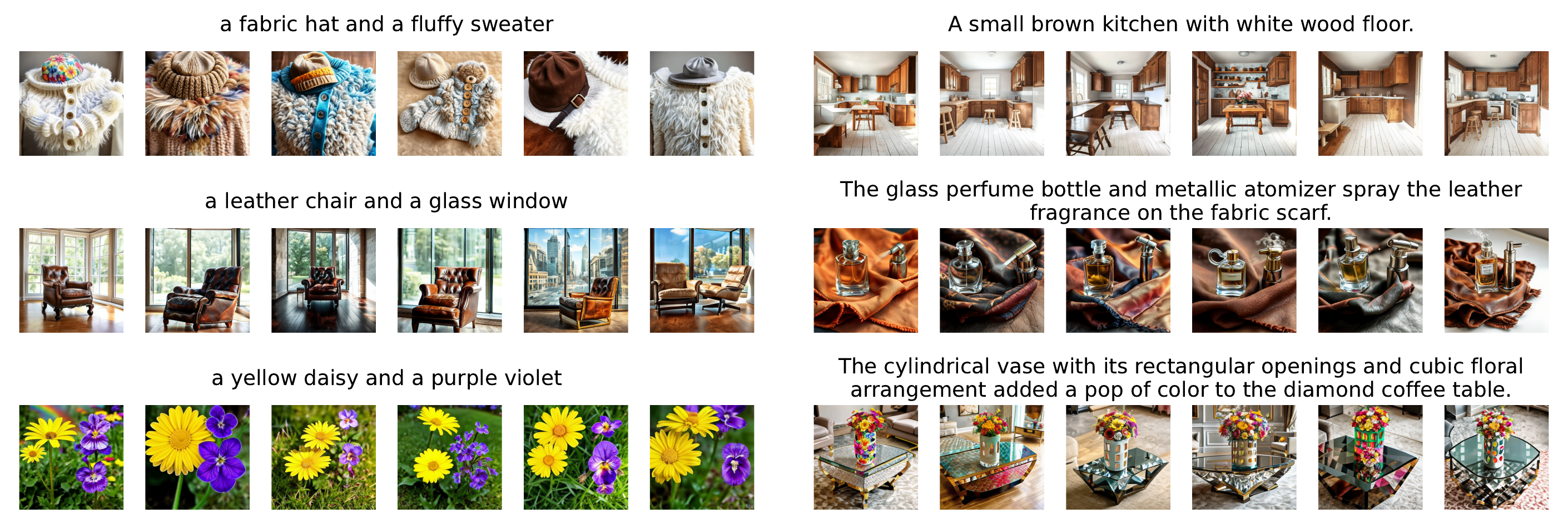}
    \caption{\small Qualitative case study of our method on diverse prompts. These results are randomly sampled.}
    \label{fig:case_multi_type}
\end{figure}

As shown in Figure~\ref{fig:case_multi_type}, our method consistently produces high-quality generations across a wide range of prompts.
All examples in this case study are randomly sampled.
Our method captures fine-grained attributes such as colors and textures with higher fidelity, preserves coherent spatial layouts in complex scenes, and maintains stability when composing multiple objects.
These results confirm that entropy-guided optimization enhances both the accuracy and consistency of text-to-image generation.

\section{Conclusion}
We studied entropy dynamics in text-to-image generation, showing that CoT expands exploration while reinforcement learning contracts it toward stable, high-reward regions. Both the mean and variance of entropy strongly predict image quality, motivating our Entropy-Guided GRPO (EG-GRPO). By protecting low-entropy tokens and focusing updates on high-entropy ones, EG-GRPO balances stability with structured exploration. Experiments on T2I-CompBench and WISE confirm its state-of-the-art performance and reduced instability, underscoring uncertainty control as a key principle for advancing text-to-image generation.

\bibliography{iclr2026_conference}
\bibliographystyle{iclr2026_conference}
\clearpage
\appendix
\section{The Use of Large Language Models}
In this work, we employed GPT-5 to assist with the polishing of English text. 
The model was used primarily for improving readability, clarity, 
and fluency of the written content, ensuring that the final manuscript 
meets high academic standards.

\section{Proofs and Derivations for EG-GRPO}
\label{app:eggrpo-appendix}

\subsection{Detailed proof of Proposition~\ref{prop:balance} (Per-batch budget balance)}
\label{app:budget-proof}

\paragraph{Setup and notation.}
For sequence $i$, write $a^{(i)}\!\triangleq\!|A^{(i)}|\ge 0$, $s^{(i)}\!\triangleq\!\mathrm{sign}(A^{(i)})\in\{\pm1\}$, and $h^{(i)}_t\!\triangleq\!\bar{H}^{(i)}_t\in[0,1]$. With the low/mid/high partitions from Section~\ref{ssec:selection}, EG-GRPO's per-sequence budget can be written as
\begin{align}
B^{(i)}_{\mathrm{EG}}
= (1-p_{\mathrm{lo}})\,a^{(i)} \;+\; \frac{1}{T^{(i)}}\sum_{t\in\mathcal{S}^{(i,m)}_{\mathrm{hi}}}\!\Big(\,\big|\,s^{(i)} a^{(i)}+\lambda h^{(i)}_t\,\big| - a^{(i)}\Big).
\label{eq:appendix-star}
\end{align}

\paragraph{Lemma 1 (Symmetric sign averaging).}
For any $a\ge 0$ and $b\ge 0$,
\[
\frac{1}{2}\Big(|a+b|+|{-}a+b|\Big)=\max(a,b).
\]
\emph{Proof.} If $b\ge a$, then $|a+b|=a+b$ and $|{-}a+b|=b-a$, so the average is $b$. If $b<a$, then $|a+b|=a+b$ and $|{-}a+b|=a-b$, so the average is $a$. \qed

\paragraph{Exact decomposition.}
Taking expectation over $s^{(i)}$ (assumed approximately symmetric due to group-normalization), Lemma~1 yields
\begin{align}
\mathbb{E}_{s^{(i)}}\!\big[\,|s^{(i)} a^{(i)}+\lambda h^{(i)}_t|\,\big]
= \max\!\big(a^{(i)},\,\lambda h^{(i)}_t\big).
\end{align}
Plugging into \eqref{eq:appendix-star} and using $(x-a)_+\!\triangleq\!\max(0,x-a)$,
\begin{align}
\mathbb{E}_{s^{(i)}}\!\big[B^{(i)}_{\mathrm{EG}}\big]
= (1-p_{\mathrm{lo}})\,a^{(i)}
  + \underbrace{\frac{1}{T^{(i)}}\sum_{t\in\mathcal{S}^{(i,m)}_{\mathrm{hi}}}\!\big(\lambda h^{(i)}_t - a^{(i)}\big)_+}_{\displaystyle \delta^{(i)}(\lambda)}.
\label{eq:appendix-exact}
\end{align}
Averaging over the batch $\mathcal{B}$ gives
\begin{align}
\mathbb{E}_{\mathcal{B}}\!\big[B^{(i)}_{\mathrm{EG}}\big]
= (1-p_{\mathrm{lo}})\,\mathbb{E}_{\mathcal{B}}[a^{(i)}]
  + \mathbb{E}_{\mathcal{B}}\!\big[\delta^{(i)}(\lambda)\big].
\label{eq:appendix-batch}
\end{align}
Here $\delta^{(i)}(\lambda)$ is nondecreasing in $\lambda$ and equals zero at $\lambda=0$.

\paragraph{Target equation for exact $\kappa$-scaling.}
If one desires $\mathbb{E}_{\mathcal{B}}[B^{(i)}_{\mathrm{EG}}]=\kappa\,\mathbb{E}_{\mathcal{B}}[a^{(i)}]$, then \eqref{eq:appendix-batch} is equivalent to
\begin{align}
\sum_{i\in\mathcal{B}}\frac{1}{T^{(i)}}\sum_{t\in\mathcal{S}^{(i,m)}_{\mathrm{hi}}}\!\big(\lambda h^{(i)}_t - a^{(i)}\big)_+
= (\kappa-1+p_{\mathrm{lo}})\sum_{i\in\mathcal{B}} a^{(i)}.
\label{eq:appendix-target}
\end{align}
The left-hand side is continuous, nondecreasing in $\lambda$, hence admits a (numerically) unique solution for any $\kappa\in(0,1]$.

\paragraph{Closed-form calibration (upper-bound match).}
For an implementable closed form, use $(x-a)_+\le x$ with $x=\lambda h^{(i)}_t$:
\[
\delta^{(i)}(\lambda) \le \lambda \cdot \underbrace{\frac{1}{T^{(i)}}\sum_{t\in\mathcal{S}^{(i,m)}_{\mathrm{hi}}}h^{(i)}_t}_{\displaystyle H^{(i)}_{\mathrm{hi}}}.
\]
Substituting this into \eqref{eq:appendix-batch} yields the upper bound
\begin{align}
\mathbb{E}_{\mathcal{B}}[B^{(i)}_{\mathrm{EG}}]
\le (1-p_{\mathrm{lo}})\,\mathbb{E}_{\mathcal{B}}[a^{(i)}]
  + \lambda\,\mathbb{E}_{\mathcal{B}}[H^{(i)}_{\mathrm{hi}}].
\label{eq:appendix-upper}
\end{align}
Matching the \emph{upper bound} to $\kappa\,\mathbb{E}_{\mathcal{B}}[a^{(i)}]$ leads to the batch-calibrated choice
\begin{align}
\lambda^\star
\;\triangleq\;
\kappa \cdot
\frac{\sum_{i \in \mathcal{B}} a^{(i)} \cdot \frac{1}{T^{(i)}} \sum_{t \in \mathcal{S}^{(i,m)}_{\mathrm{lo}}} 1}
{\sum_{i \in \mathcal{B}} \frac{1}{T^{(i)}} \sum_{t \in \mathcal{S}^{(i,m)}_{\mathrm{hi}}} h^{(i)}_t}
\;=\;
\kappa \cdot
\frac{\sum_{i \in \mathcal{B}} |A^{(i)}| \cdot \frac{|\mathcal{S}^{(i,m)}_{\mathrm{lo}}|}{T^{(i)}}}
{\sum_{i \in \mathcal{B}} \frac{1}{T^{(i)}} \sum_{t \in \mathcal{S}^{(i,m)}_{\mathrm{hi}}} \mathrm{sg}[\bar{H}^{(i)}_t]}\!,
\end{align}
which is exactly \eqref{eq:lambdastar}. This calibration equates ``saved'' low-entropy budget with ``reinvested'' high-entropy mass in an upper-bound sense; see Remarks below.

\paragraph{Remarks on calibration accuracy.}
(i) Define the nonnegative discrepancy
\[
\varepsilon^{(i)}(\lambda)\;\triangleq\;\lambda H^{(i)}_{\mathrm{hi}}-\delta^{(i)}(\lambda)
=\frac{1}{T^{(i)}}\sum_{t\in\mathcal{S}^{(i,m)}_{\mathrm{hi}}}\min(\lambda h^{(i)}_t,\,a^{(i)}).
\]
It vanishes as $\lambda h^{(i)}_t\!\gg\!a^{(i)}$ for most high-entropy tokens, and is statistically damped by batch averaging. (ii) For \emph{exact} $\kappa$-scaling, one may solve \eqref{eq:appendix-target} via a 1D root finder; we keep \eqref{eq:lambdastar} for simplicity and stability. (iii) With $\kappa=1$, \eqref{eq:lambdastar} delivers batch-level budget neutrality in the calibrated upper-bound sense; empirically it closely tracks neutrality while avoiding per-step root solving.

\paragraph{Conclusion.}
Combining \eqref{eq:appendix-batch}--\eqref{eq:appendix-upper} with the calibration above yields
\(
\mathbb{E}_{\mathcal{B}}[B^{(i)}_{\mathrm{EG}}]\approx \kappa\,\mathbb{E}_{\mathcal{B}}[B^{(i)}_{\mathrm{GRPO}}]
\),
as stated in Proposition~\ref{prop:balance}.

\subsection{Proof of Corollary~\ref{cor:fixed-point} (Fixed-point neutrality)}
\label{app:fixed-point}

Suppose $A^{(i)}\equiv 0$ for all $i$ in a batch. Then $a^{(i)}=0$, and the numerator of \eqref{eq:lambdastar} vanishes, yielding $\lambda^\star=0$ for any $\kappa\in(0,1]$. By \eqref{eq:tildeA}, $\tilde{A}^{(i)}_t \equiv 0$ for all tokens $t$, so the loss \eqref{eq:eggrpo} reduces to the reference-policy KL term. Hence any GRPO stationary point remains stationary under EG-GRPO, proving the corollary.

\subsection{Optional exact per-batch scaling (implementation note)}
\label{app:exact-solve}

If precise $\kappa$-scaling is required, solve the monotone equation \eqref{eq:appendix-target} for $\lambda$ by bisection or Newton's method. This guarantees
\(
\mathbb{E}_{\mathcal{B}}[B^{(i)}_{\mathrm{EG}}]=\kappa\,\mathbb{E}_{\mathcal{B}}[B^{(i)}_{\mathrm{GRPO}}]
\)
\emph{exactly} per batch, at the cost of a 1D search.

\section{Instruction for WISE}
\label{app:wise}
\begin{tcolorbox}[colback=gray!5!white,colframe=black!50,title=WISE Instruction]
You are asked to write a concise text description to guide the generation of an image based on this prompt: ``\{\}''. 
Provide a brief, precise visualization of all elements in the prompt. Your description should:

1. Include every object mentioned.\\
2. Specify visual attributes (color, number, shape, texture) if given.\\
3. Clarify spatial or relational positioning if specified.\\
4. Be concise ($\leq$50 words) but include the most common features and states inferred from real-world knowledge.\\
5. Apply real-world knowledge (cultural, religious, temporal, spatial, biological, physical, or chemical reasoning) and select only the single most relevant aspect that naturally enhances the original prompt to infer context (e.g., season, appearance, identity, cultural usage, or natural state) and reflect it in the objects. Use direct, widely accepted interpretations; include cultural or religious meanings only when they are common real-world associations. Avoid abstract or purely metaphorical interpretations.\\
6. Emphasize the current state of each object individually, as inferred from its environment or context. Reason separately for each object, considering temporal, cultural, or physical factors, and prioritize states logically implied by the prompt.\\
7. If multiple objects are present, reason from each object's inherent physical or chemical properties and their interactions with the environment and with all other objects. Ensure that the inferred state, behavior, and interaction of every single object is logically correct and consistent with real-world rules, and clearly describe all differences and interactions relative to each other.\\
8. Ensure realism and aesthetic quality: all objects and interactions must follow real-world rules and appear visually consistent and appealing.\\
9. Do not omit objects explicitly mentioned, or add ones not specified or logically inferred.\\
10. Always preserve and emphasize the original objects and scene as the primary focus.\\
11. Always output a complete natural language description, never an image or symbolic shorthand.
\end{tcolorbox}

\section{COMPUTATIONAL EFFICIENCY AND CONVERGENCE ANALYSIS}
\label{app:efficiency}

To address concerns regarding the computational overhead of the proposed EG-GRPO, specifically the token-level entropy computation, percentile thresholding, and batch-level bonus recalibration, we conducted a rigorous benchmarking comparison of our method against the baseline T2I-R1.

\subsection{WALL-CLOCK TIME AND MEMORY OVERHEAD}

We measured the training step time and peak GPU memory usage on NVIDIA A100 GPUs under identical experimental settings. As shown in Table~\ref{tab:overhead}, EG-GRPO introduces negligible overhead.

\begin{table}[h]
    \centering
    \begin{tabular}{lccc}
    \toprule
    \textbf{Metric} & \textbf{T2I-R1 (Baseline)} & \textbf{EG-GRPO (Ours)} & \textbf{Overhead} \\
    \midrule
    Step Time (s) & 50.20 & 50.88 & +1.35\% \\
    GPU Memory (GB) & 30.79 & 31.17 & ~~~+0.38 GB \\
    \bottomrule
    \end{tabular}
    \caption{Computational overhead comparison between T2I-R1 and EG-GRPO. The overhead introduced by our entropy-guided mechanism is marginal in both time and memory.}
    \label{tab:overhead}
\end{table}

The minimal increase in wall-clock time ($\sim 1.35\%$) suggests that the complexity of entropy operations ($O(L \cdot V)$ for sequence length $L$ and vocabulary size $V$) is trivial compared to the computational cost of the model's forward and backward passes. Importantly, this overhead ratio is expected to decrease further as model scale increases (e.g., larger hidden dimensions), as the entropy calculation depends only on the output logits and not the model depth or parameter count.

\subsection{CONVERGENCE EFFICIENCY}

While the per-step cost is marginally higher, EG-GRPO exhibits significantly better sample efficiency. We compared the reward progression of both models over the same number of training steps. As detailed in Table~\ref{tab:convergence}, EG-GRPO consistently achieves higher rewards earlier in the training process.

\begin{table}[h]
    \centering
    \begin{tabular}{lccccc}
    \toprule
    \textbf{Step} & \textbf{100} & \textbf{200} & \textbf{400} & \textbf{600} & \textbf{800} \\
    \midrule
    Reward (T2I-R1) & 2.1105 & 2.1130 & 2.1208 & 2.1339 & 2.1340 \\
    Reward (EG-GRPO) & \textbf{2.1411} & \textbf{2.1836} & \textbf{2.1968} & \textbf{2.2075} & \textbf{2.2117} \\
    \bottomrule
    \end{tabular}
    \caption{Convergence comparison: Reward scores at different training steps. EG-GRPO achieves higher performance consistently, indicating a superior time-to-convergence ratio.}
    \label{tab:convergence}
\end{table}

This superior convergence efficiency outweighs the slight computational overhead, making EG-GRPO a more practical choice for large-scale training.

\section{QUALITATIVE DIVERSITY CASES}
\label{app:more_case}
To qualitatively compare the diversity of T2I-R1 and EG-GRPO, we generate 20 samples for each of three identical prompts and concatenate them for side-by-side visualization, as shown in Figure~\ref{fig:diver_compar}. For a fair comparison, samples from both models are selected under comparable generation quality scores.

\begin{figure}[ht]
    \centering
    \includegraphics[width=0.8\linewidth]{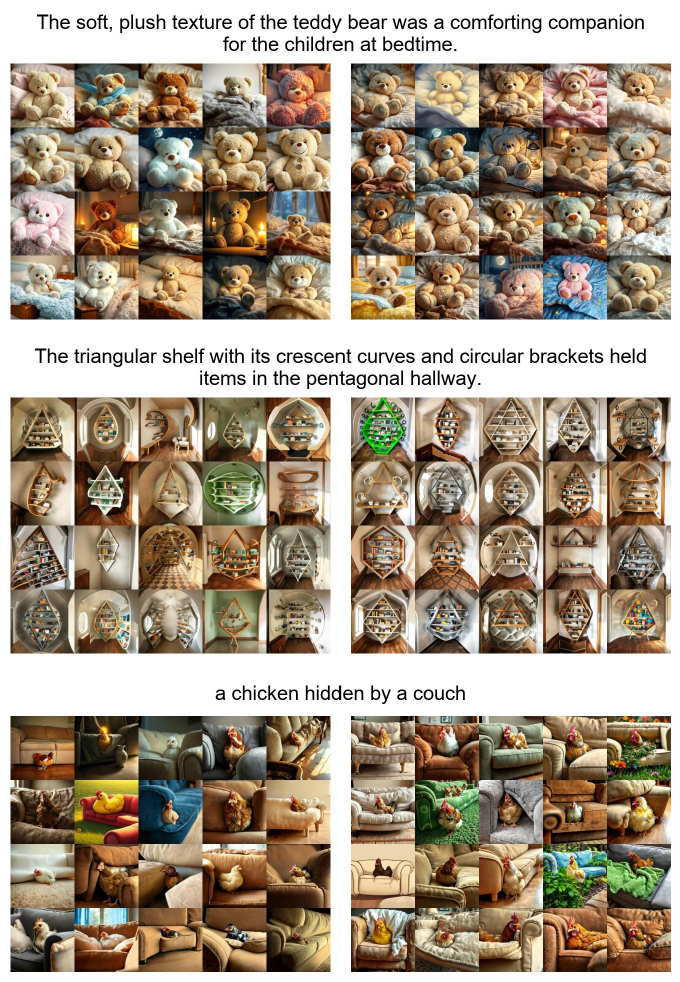}
    \caption{\small Comparison of generation diversity for \textbf{T2I-R1} (left) and \textbf{EG-GRPO} (right).}
    \label{fig:diver_compar}
\end{figure}
\end{document}